\documentclass{article}

\usepackage{microtype}
\usepackage{graphicx}
\usepackage{subcaption}
\usepackage{booktabs} 
\usepackage{amssymb}
\usepackage{pifont}
\usepackage{enumitem}
\usepackage{makecell}

\usepackage{hyperref}

\usepackage[accepted]{icml2026}

\usepackage{amsmath}
\usepackage{amssymb}
\usepackage{mathtools}
\usepackage{amsthm}

\usepackage[capitalize,noabbrev]{cleveref}

\theoremstyle{plain}
\newtheorem{theorem}{Theorem}[section]

\theoremstyle{definition}

\theoremstyle{remark}

\usepackage[disable,textsize=tiny]{todonotes}

\newcommand{\PAR}[1]{\vskip1pt \noindent {\bf #1~}}
\icmltitlerunning{Test-Time Training with KV Binding Is Secretly Linear Attention}

\begin{document}

\twocolumn[
  \icmltitle{Test-Time Training with KV Binding Is Secretly Linear Attention}



  \icmlsetsymbol{equal}{*}

  \begin{icmlauthorlist}
    \icmlauthor{Junchen Liu}{equal,nv,ut,vec}
    \icmlauthor{Sven Elflein}{nv,ut,vec}
    \icmlauthor{Or Litany}{nv,tec}
    \icmlauthor{Zan Gojcic}{nv}
    \icmlauthor{Ruilong Li}{equal,nv}
  \end{icmlauthorlist}

  \icmlaffiliation{nv}{NVIDIA, Toronto, Ontario, Canada}
  \icmlaffiliation{ut}{University of Toronto, Toronto, Ontario, Canada}
  \icmlaffiliation{vec}{Vector Institute, Toronto, Ontario, Canada}
  \icmlaffiliation{tec}{Technion -- Israel Institute of Technology, Haifa, Israel}

  \icmlcorrespondingauthor{Ruilong Li}{ruilongl@nvidia.com}

  \icmlkeywords{Machine Learning, ICML}

  \vskip 0.3in
]



\printAffiliationsAndNotice{\icmlEqualContribution}

\begin{abstract}
Test-time training (TTT) with KV binding as sequence modeling layer is commonly interpreted as a form of online meta-learning that memorizes a key–value mapping at test time. However, our analysis reveals multiple phenomena that contradict this memorization-based interpretation. Motivated by these findings, we revisit the formulation of TTT and show that a broad class of TTT architectures can be expressed as a form of learned linear attention operator. Beyond explaining previously puzzling model behaviors, this perspective yields multiple practical benefits: it enables principled architectural simplifications, admits fully parallel formulations that preserve performance while improving efficiency, and provides a systematic reduction of diverse TTT variants to a standard linear attention form. Overall, our results reframe TTT not as test-time memorization, but as learned linear attention with enhanced representational capacity. Project page: \url{https://research.nvidia.com/labs/sil/projects/tttla/}.
\end{abstract}
\vspace{-4mm}
\section{Introduction}
Test-Time Training (TTT) has emerged as a powerful paradigm for dynamic model adaptation. Initially introduced to address distribution shift by updating model parameters on unlabeled test inputs, 
TTT has since evolved into a distinct architectural primitive~\cite{sun2020test}. 
Recent work has increasingly framed TTT as an alternative to standard softmax attention in transformers, 
offering favorable properties like linear-time compute and constant memory usage 
during autoregressive inference~\cite{zhang2025test,behrouz2026titans}.
Among the various TTT formulations, this paper focuses on TTT with KV binding~\cite{sunlearning,zhang2025test,han2025vit,behrouz2026titans}, which optimizes a self-supervised key-value association objective in the inner loop, as opposed to end-to-end methods that backpropagate from the final task loss~\cite{tandon2025end,behrouz2025nested}.
The prevailing interpretation of TTT casts it as a form of online meta-learning or memorization~\cite{sunlearning,finn2017model,metz2018meta}, 
where the ``inner loop'' dynamically constructs a 
temporary key-value (KV) map by optimizing a neural network (e.g., an MLP) on previously observed tokens. 
The subsequent inference step is viewed as querying this stored knowledge. 
This perspective has led to increased complexity in recent architectural designs,
motivating the use of sophisticated optimizers, normalization schemes, 
and deep inner-loop networks~\cite{zhang2025test,han2025vit,behrouz2026titans,behrouz2025atlas,dalal2025one}, 
all explicitly designed to improve the fidelity of this ``memorization''.

\PAR{The Memorization Paradox.} 
Yet, this interpretation of TTT as ``test-time memorization'' can be directly contradicted by empirical evidence. If TTT would truly function by explicitly learning and retrieving key–value associations, its behavior would conform to basic principles of memory formation and optimization dynamics. Contrary to this expectation, we identify systematic anomalies that directly contradict this hypothesis: 

\begin{itemize}[leftmargin=*, noitemsep, topsep=-4pt, parsep=4pt]
\item\emph{Distributional Asymmetry.}
Unlike standard attention, in which queries and keys share the same semantic space, converged TTT models exhibit a significant distributional mismatch between queries and keys.

\item \emph{Replacing queries with keys.} 
Replacing queries with keys for TTT models has negligible effect on the task performance, suggesting that queries do not play a functional retrieval role as in standard attention.

\item\emph{Optimization vs. Performance.} 
Counterintuitively, improvements in the inner loop, which can be interpreted as stronger ``memorization'', do not guarantee better downstream performance.

\item\emph{The Gradient Ascent Anomaly.} 
Most strikingly, we find that replacing inner-loop gradient descent with gradient ascent always preserves, and in some cases even improves, task performance.
\end{itemize}

These observations collectively challenge the prevailing view of TTT as an online meta-learning or key-value memorization mechanism.

\PAR{TTT is Secretly Linear Attention.} Motivated by these observations, and by prior work showing that TTT reduces to linear attention in the restricted case of a single linear inner-loop layer with zero initialization~\cite{sunlearning}, we revisit the mathematical formulation of TTT. We show analytically that even TTT variants with complex fast-weight parameterizations (including multi-layer MLPs and momentum) can be equivalently rewritten as a form of learned linear attention operator~\cite{katharopoulos2020transformers}.

Under this unified view, the inner loop does not perform ``meta learning'' in the conventional sense. Instead, it induces a structured, history-dependent mixing of query, key, and value vectors. This perspective resolves the empirical paradoxes identified above: 
gradient ascent preserves performance because sign inversions are absorbed into the learned value projection, and distributional symmetry between queries and keys is unnecessary because the mechanism operates as a feature mixer rather than a similarity-based retrieval system.

\PAR{Practical Implications.} Unmasking TTT as Linear Attention is not just a theoretical exercise; it unlocks significant practical benefits. By adopting this perspective, we:
\begin{itemize}[leftmargin=*, noitemsep, topsep=-2pt, parsep=4pt]
\item \emph{Simplify.} We show that many components introduced in prior TTT architectures, such as weight normalization and momentum, are often redundant.
\item \emph{Parallelize.} We derive a fully parallel form of TTT that achieves up to 4.0$\times$ inference throughput on attention calculation while maintaining performance.
\item \emph{Generalize.} We provide a systematic reduction of diverse TTT variants to a common linear attention form. Empirically, the full TTT model exceeds this reduced linear-attention variant only marginally ($+0.87$ perplexity on LLM, $+0.24$ dB on NVS), suggesting that the additional inner-loop machinery contributes only modest gains beyond what standard linear attention already captures.
\end{itemize}

\PAR{Conflict of Interest Disclosure.} This work was conducted at NVIDIA. J.L., S.E., O.L., Z.G., and R.L. are employed by or affiliated with NVIDIA. The paper analyzes publicly described TTT architectures and evaluates open-source implementations rather than NVIDIA-developed products or systems. The authors do not report additional financial conflicts of interest.

\section{Related Work}

\subsection{Linear Attention}
Recurrent Neural Networks (RNNs) have recently gained renewed interest as efficient alternatives to standard Transformers~\cite{vaswani2017attention}. 
The introduction of linear attention~\cite{katharopoulos2020transformers} has accelerated advances in RNN-based architectures, 
leading to the incorporation of various mechanisms such as 
token-dependent decay factors~\cite{gu2021efficiently,smith2023simplified,orvieto2023resurrecting,peng2023rwkv,sun2023retentive} and, more recently, 
data-dependent decay~\cite{qin2023hierarchically,qin2024hgrn2,peng2024eagle,gu2024mamba,dao2024transformers,zhang2024gated,yang2024gated}. 
The data-dependent decay factor, termed the selective mechanism in Mamba~\cite{gu2024mamba,dao2024transformers}, 
has been highlighted as crucial for strong in-context learning performance. DeltaNet~\cite{schlag2021linear} conditions the update rule on both the current token and state for improved retrieval, 
and chunk-parallelization~\cite{yang2024parallelizing} has enabled its efficient deployment in many recent architectures
~\cite{yang2024gated,peng2025rwkv,behrouz2026titans,zhong2025understanding,team2025kimi,peng2025gated,lei2025error,liu2024longhorn}.
Notably, DeltaNet and its variants are equivalent to TTT with a single linear layer and MSE loss~\cite{yang2024gated}.

\subsection{Test-Time Training}
Test-time training (TTT) broadly refers to methods that continue to update model parameters during inference.
This concept was first introduced to address train-test distribution shift~\cite{sun2020test,gandelsman2022test}, 
where models adapt to test data by optimizing a self-supervised objective at inference time.
Subsequently, TTT has been explored for improving inference-time performance in specific applications 
such as 3D reconstruction~\cite{chen2024g3r,chen2025ttt3r,yuan2025test3r}.
More recently, TTT has been developed as a sequence modeling architecture with linear complexity, 
serving as an alternative to softmax attention in transformers~\cite{sunlearning,wang2025test,zhang2025test,han2025vit,dalal2025one,behrouz2026titans}.

When used as a sequence modeling layer, TTT has two main variants:
(1) methods that use a key-value binding loss (e.g., dot-product or MSE loss) as the inner-loop objective~\cite{sunlearning,zhang2025test,han2025vit,behrouz2026titans}, referred to as TTT-KVB in prior work~\cite{tandon2025end}, and
(2) methods that perform end-to-end backpropagation through the inner loop from the final task loss (e.g., cross-entropy in language modeling), referred to as TTT-E2E~\cite{tandon2025end,behrouz2025nested}.
This paper focuses on the first variant.

TTT has demonstrated effectiveness across diverse tasks, 
including language modeling~\cite{sunlearning,wang2025test,zhang2025test},
video generation~\cite{dalal2025one,zhang2025test}, novel view synthesis~\cite{zhang2025test}, 
and image classification~\cite{han2025vit}.
In this paradigm, part of the model parameters (known as fast weights~\cite{hinton1987using}) are updated online at test time
using a self-supervised key-value association loss, with the goal of memorizing history associations.
Since this nested optimization is performed in-context, TTT is also referred to as in-context meta-learning~\cite{finn2017model,metz2018meta} and
is tightly related to the concept of fast weight programming~\cite{schlag2021linear}
and test-time scaling~\cite{muennighoff2025s1,snell2024scaling}.

The design space of TTT is rich and has been actively explored.
LaCT~\cite{zhang2025test} improves hardware utilization through large chunk sizes.
Based on the key-value memorization perspective, other works have explored advanced 
test-time optimizers~\cite{behrouz2026titans,zhang2025test,karami2025lattice} 
and alternative regression targets~\cite{han2025vit,behrouz2025s}.
Notably, prior work has shown that when the inner loop consists of a single linear layer, TTT can be reinterpreted as a linear attention operator~\cite{sunlearning}. In this work, we demonstrate that this interpretation extends to general TTT architectures with complex multi-layer MLPs as inner loops, and show that this perspective enables practical benefits including simplified formulations and efficient parallel implementations.
\begin{figure}[t!]
\centering
\includegraphics[width=1.0\linewidth]{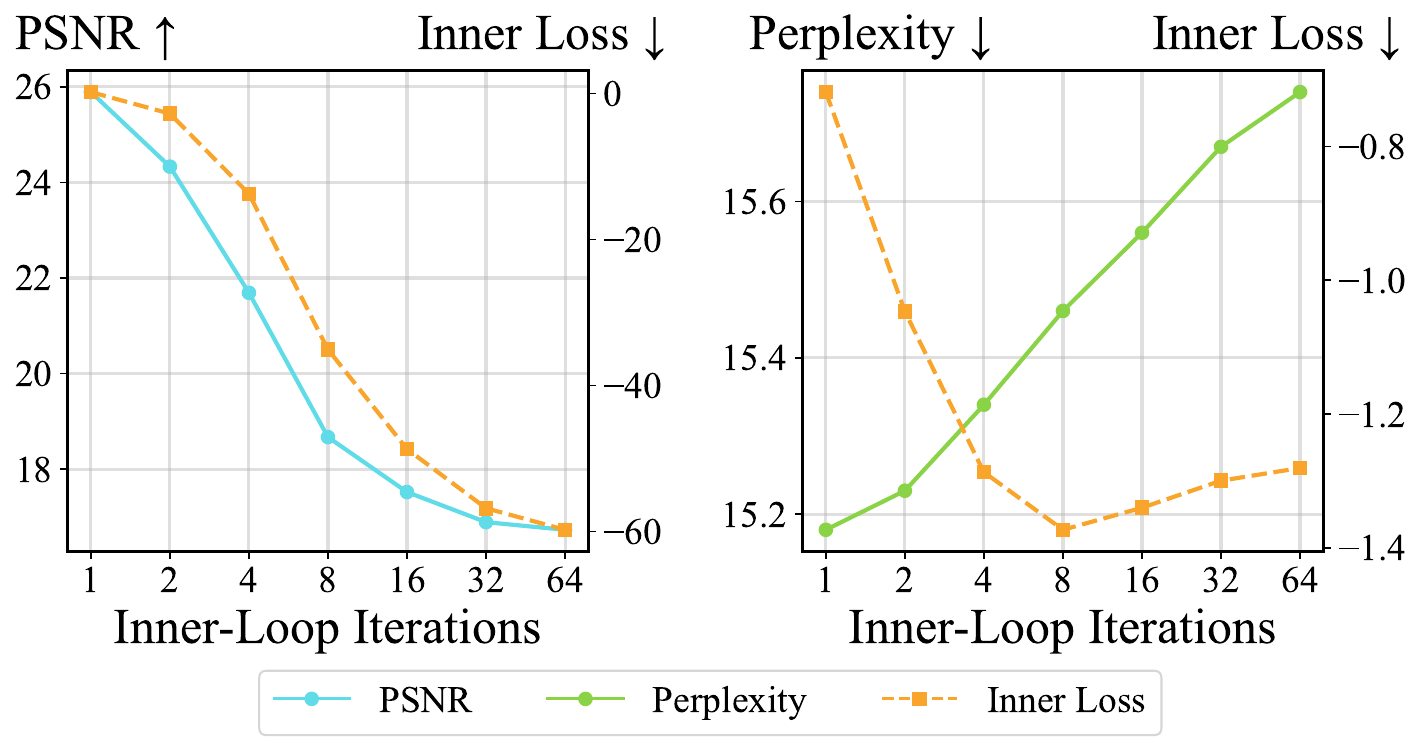}
\caption{\textbf{Inner-Loop Optimization vs.\ Performance}. Increasing inner-loop iterations improves inner-loop loss but degrades task performance, contradicting the memorization-based interpretation of TTT. Experiments are based on LaCT~\cite{zhang2025test}.}
\label{fig:more_inner_iters}
\end{figure}

\section{Preliminary}\label{sec:preliminary}
We provide a brief overview of the TTT mechanism that this paper focuses on.
In TTT, each sequence modeling layer maintains a set of fast weights $f_\theta$ (typically a lightweight MLP) that is updated \emph{during both training and inference}.
Given an input sequence, tokens are first projected into keys $K$, values $V$, and queries $Q$ (similar to standard attention).
The core idea is to perform \emph{online gradient descent} on $f_\theta$ using a self-supervised key-value binding objective:
for each token, the key $k$ serves as input and the value $v$ serves as the regression target, i.e., $\mathcal{L} = \|f_\theta(k) - v\|^2$ (or a dot-product loss variant).
After updating $\theta$ with this objective, the query $q$ is passed through the updated function $f_\theta$ to produce the output.

This mechanism is commonly interpreted as a \emph{storage-and-retrieval} system~\cite{sunlearning,finn2017model}: the inner-loop optimization ``memorizes'' key-value associations into $f_\theta$, 
which are later ``retrieved'' by querying the learned function. 
Under this view, architectural capacity, optimizer selection, and the number of inner-loop steps are all motivated by achieving more faithful memorization of key–value associations. 

This variant of test-time training, which optimizes a key-value binding objective in the inner loop, is referred to as TTT-KVB in prior work~\cite{tandon2025end}. 
Among the various TTT formulations~\cite{sunlearning,gandelsman2022test,sun2020test,tandon2025end,behrouz2025nested}, this paper exclusively focuses on this variant.
\section{Empirical Contradictions to Memorization}\label{sec:observations}
As we show in this section, the empirical behavior of TTT models consistently violates the properties implied by the storage-and-retrieval interpretation.

\subsection{Better Inner Loss Leads to Worse Performance}
Under a memorization-based interpretation, inner-loop loss serves as a natural proxy for memorization quality: lower loss indicates more accurate encoding of key–value mappings and should therefore improve task performance.

To test this property, we vary the number of inner-loop gradient steps at inference time, which usually yields lower inner-loop loss in pretrained TTT models~\cite{zhang2025test, han2025vit}, without changing the architecture, training data, or learned parameters.

\cref{fig:more_inner_iters} shows that despite improved inner-loop fitting, downstream performance degrades consistently as the number of inner-loop steps increases. This inverse relationship holds across both LLMs and novel view synthesis (NVS) tasks. Such behavior directly contradicts the memorization-based interpretation of TTT, under which more accurate key–value fitting should be beneficial, or at least not harmful for the downstream performance. Instead, these results indicate that the inner loop affects model computation in a fundamentally different manner, inconsistent with conventional notions of test-time memory.

\subsection{TTT with Gradient Ascent}

The degradation in downstream performance with improved inner-loop fitting raises a more fundamental question: \emph{is gradient-descent-based memorization in the inner loop necessary at all?}

To test this, we replace gradient descent in the inner loop with \emph{gradient ascent}, effectively flipping the sign of all fast-weight gradients. Importantly, this is not a pure inference-time intervention on a pretrained checkpoint: each gradient-ascent model is retrained from scratch with the sign-flipped inner-loop update applied throughout both training and inference, so the surrounding parameters have a chance to adapt to the inverted objective. Under a memorization-based interpretation, this change should still have a strong negative effect, as gradient ascent explicitly worsens the fit to the key–value regression objective.

Contrary to this expectation, across all evaluated models and tasks, TTT with gradient ascent performs comparably to, and in some cases even slightly better than, standard gradient descent (Table~\ref{tab:observation_results}). This holds despite the fact that gradient ascent consistently increases the inner-loop loss.

Notably, in methods such as LaCT~\cite{zhang2025test}, where the inner-loop loss is a Frobenius inner product, flipping the gradient is equivalent to negating the loss itself. The fact that TTT remains effective even under such an inverted objective provides additional evidence that TTT does not rely on memorizing a key–value mapping.

\begin{table}[t!]
\centering
\caption{
\textbf{Observations Contradicting the Storage-and-Retrieval Interpretation of TTT.}
Replacing gradient descent with ascent breaks the storage interpretation, and replacing queries with keys breaks the retrieval interpretation, yet task performance remains mostly unchanged. Experiments are base on LaCT~\cite{zhang2025test} and ViTTT~\cite{han2025vit}.
}\label{tab:observation_results}
\resizebox{0.48\textwidth}{!}{%
\begin{tabular}{lccc}
\toprule
Model & \makecell{Perplexity $\downarrow$\\(LaCT-LLM)} & \makecell{PSNR $\uparrow$\\(LaCT-NVS)} & \makecell{Top-1 Acc$\uparrow$\\(ViTTT)} \\
\midrule
Baseline        & 16.43 & 25.94 & 79.34 \\
Gradient Ascent  & 16.19 & 25.85 & 79.61 \\
Replace $Q$ with $K$         & 16.18 & 25.95 & 79.18 \\
\bottomrule
\end{tabular}%
}
\end{table}

\subsection{Distributional Asymmetry Between $Q$ and $K$}
\begin{figure}[t]
\centering
\includegraphics[width=1.0\linewidth]{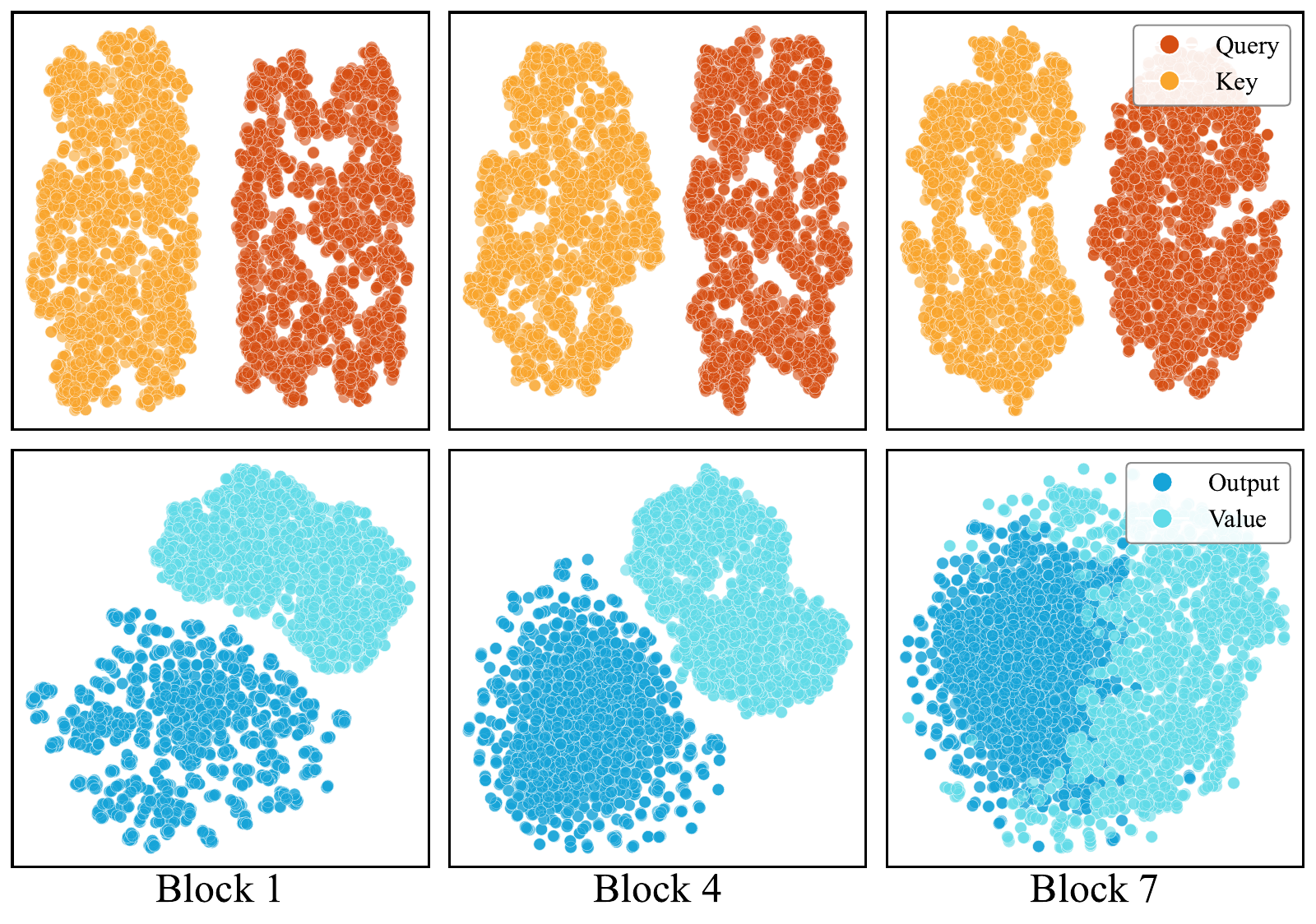}
\caption{
\textbf{Distributional Asymmetry Between $Q$ and $K$.}
t-SNE visualizations of $(Q,K)$ and $(V,O)$ features in a pretrained LaCT~\cite{zhang2025test} model on the NVS task, showing that the TTT inner loop is evaluated out of distribution and thus does not perform reliable retrieval.
}\label{fig:distribution}
\end{figure}

As we mentioned above, TTT is commonly interpreted as a \emph{storage-and-retrieval} mechanism. In the previous subsections, we showed that the \emph{storage} aspect of this interpretation is empirically inconsistent with model behavior. We now examine whether the \emph{retrieval} aspect provides a plausible explanation.

During the TTT inner-loop update, the parametric function (e.g., MLP) is optimized using keys $K$ as inputs and values $V$ as regression targets. After this update, queries $Q$ are feed into the same function to produce outputs $O$. For such a retrieval process to be effective, the distribution of $Q$ must be close to that of the $K$ used during optimization. Otherwise, the parametric function is evaluated out-of-distribution, and its outputs are not expected to reliably lie on the manifold of $V$. In other words, meaningful retrieval requires substantial distributional overlap between $Q$ and $K$.

To test this assumption, we analyze the distributions of $Q$ and $K$ in a pretrained LaCT~\cite{zhang2025test} model on the NVS task. For each layer, we collect $Q$ and $K$ vectors across all tokens and visualize their distributions using t-SNE~\cite{maatenVisualizingDataUsing2008}. As shown in \cref{fig:distribution}, we observe a pronounced and consistent mismatch between $Q$/$K$ and $V$/$O$ across layers, indicating that the parametric function optimized on keys, is systematically evaluated on out-of-distribution inputs when applied to queries, at both training and inference time. Under such conditions, the resulting outputs cannot be interpreted as reliable retrieval of stored key–value information, and this mismatch persists regardless of how accurately the inner loop fits the key–value regression objective.

\subsection{Replacing $Q$ with $K$}
The distributional asymmetry between queries and keys again raises a more direct question: \emph{is the query representation $Q$ necessary at all for TTT to function?}

To test this, we perform a simple experiment in which we replace the query $Q$ with the key $K$ when computing the TTT output. In standard attention mechanisms, such a substitution would be highly disruptive. Because attention weights depend on query–key similarity, replacing $Q$ with $K$ typically leads to degenerate behavior dominated by self-similarity and a substantial drop in performance.

Contrary to the expectation, we observe little to no degradation in TTT's performance. For both LaCT and ViTTT formulations performance remains comparable (Table~\ref{tab:observation_results}). This insensitivity to the query representation contradicts a retrieval-based interpretation of TTT. If the inner loop relied on query-based access to stored information, replacing $Q$ with $K$ should significantly alter model behavior. Instead, the observed invariance indicates that $Q$ does not function as a meaningful query into a memorized key-value map.

\PAR{Summary.}
Taken together, these observations challenge the interpretation of test-time training as a storage-and-retrieval mechanism. Downstream behavior is largely insensitive to both the quality and direction of inner-loop optimization, and to the presence of a meaningful query signal. These failures suggest that in practice TTT does not operate as test-time memorization.
\section{TTT Is Secretly Linear Attention}
\label{sec:ttt_as_linear}
If TTT is not functioning as a memorization mechanism, how should its behavior be understood? An important clue comes from prior work, which shows that in the restricted setting of a single linear inner-loop layer with zero initialization, TTT is exactly equivalent to linear attention \cite{sunlearning}. More generally, despite their apparent complexity, existing TTT variants share the defining computational properties of linear attention: linear-time computation and a constant-size state with respect to sequence length.

Motivated by this connection, we re-examine TTT by explicitly unrolling the inner-loop updates. We show analytically that TTT induces a linear attention–like operator in a general form, even when the inner loop consists of multi-layer non-linear mappings (\cref{sec:general_form}). This perspective naturally explains our empirical observations that contradict the storage-and-retrieval interpretation (\cref{sec:explaining_empirical_observations}). Finally, we rewrite two representative variants of TTT, LaCT~\cite{zhang2025test} and ViTTT~\cite{han2025vit}, in their linear attention form (\cref{sec:example_LaCT,sec:example_ViTTT}). Detailed derivations are deferred to the appendix.

\subsection{General Form}
\label{sec:general_form}
\begin{theorem}[Linearization of Inner-Loop Updates]
\label{thm:linearization}
Consider a TTT model whose inner-loop function has a linear, bias-free final layer,
\[
f(x) = \phi(x;\Theta)\, W ,
\]
where $\phi(x;\Theta) \in \mathbb{R}^{D_{\mathrm{h}}}$ denotes the hidden representation of the inner-loop function with parameters $\Theta$, and
$W \in \mathbb{R}^{D_{\mathrm{h}} \times D_{\mathrm{out}}}$ is the weight matrix of the final layer.
Suppose that at step $t$, the inner loop performs one step of gradient descent on an objective $\mathcal{L}$ with learning rate $\eta$, using key input $k$, updating all trainable parameters,
\[
(W_{t+1}, \Theta_{t+1})
=
(W_t, \Theta_t)
-
\eta \nabla_{(W_t,\Theta_t)} \mathcal{L}(f_t(k)),
\]
where $\phi_t(\cdot) \triangleq \phi(\cdot;\Theta_t)$ and $f_t(\cdot) \triangleq \phi_t(\cdot) W_t$.
Then, for any query $q$, the output after the update can be written as
\[
o
=
\phi_{t+1}(q)
\left(
W_t
+
\phi_t(k)^{\top} g_t(k)
\right),
\quad
g_t(k) \triangleq -\eta\, \frac{\partial \mathcal{L}}{\partial f_t(k)} .
\]

This expression is a linear attention operator of the form
\[
o = \hat q \left( S_0 + \hat k^{\top} \hat v \right),
\]
where
\[
\hat q = \phi_{t+1}(q), \quad
\hat k = \phi_t(k), \quad
\hat v = g_t(k), \quad
S_0 = W_t .
\]
\end{theorem}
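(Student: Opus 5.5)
The plan is a direct computation: split the update into its $W$-part and its $\Theta$-part, compute the $W$-gradient in closed form via the chain rule through the linear final layer, and then evaluate the updated function at the query. The point is that only the last layer's update needs an explicit formula. The hidden-layer update is simply absorbed into the updated feature map $\phi_{t+1}$, so no linearization or approximation of $\phi$ is needed.

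\emph{Step 1 (gradient of the final layer).} With $\Theta_t$ fixed, the loss depends on $W_t$ only through $f_t(k)=\phi_t(k)W_t$, a row vector in $\mathbb{R}^{1\times D_{\mathrm{out}}}$. For a single token this map is linear in $W_t$, so the chain rule gives the outer product
\[
\nabla_{W_t}\mathcal{L} \;=\; \phi_t(k)^{\top}\,\frac{\partial \mathcal{L}}{\partial f_t(k)} .
\]
This can be checked entrywise: $\partial f_t(k)_j/\partial (W_t)_{ij}=\phi_t(k)_i$. Substituting into the update rule gives
\[
W_{t+1}=W_t+\phi_t(k)^{\top}g_t(k),
\]
where $g_t(k)=-\eta\,\partial\mathcal{L}/\partial f_t(k)$ is evaluated at the pre-update parameters. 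The same computation works for any differentiable $\mathcal{L}$, including MSE and dot-product losses, because only $\partial\mathcal{L}/\partial f_t(k)$ enters.

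\emph{Step 2 (evaluate at the query).} The simultaneous update also changes $\Theta_t$ to $\Theta_{t+1}$, but that change only affects the feature map. By definition the output after the update is
\[
o=f_{t+1}(q)=\phi(q;\Theta_{t+1})\,W_{t+1}=\phi_{t+1}(q)\bigl(W_t+\phi_t(k)^{\top}g_t(k)\bigr).
\]
This is the claimed identity. Reading off $\hat q=\phi_{t+1}(q)$, $\hat k=\phi_t(k)$, $\hat v=g_t(k)$ and $S_0=W_t$ gives the form $o=\hat q(S_0+\hat k^{\top}\hat v)$. This is a linear-attention read-out from an initial state plus a rank-one key--value outer-product update.

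\emph{Main obstacle.} The delicate point is bookkeeping rather than hard analysis. The key feature must use the \emph{old} parameters $\Theta_t$, because the gradient is taken at $(W_t,\Theta_t)$. The query feature must use the \emph{new} parameters $\Theta_{t+1}$, because evaluation happens after the update. One must also confirm that the $W$-gradient does not involve $\nabla_\Theta$ at all. This holds because the gradient step is a joint step taken at a single point, so the two blocks of the gradient are computed independently. I would state this explicitly so that the asymmetry $\phi_t$ versus $\phi_{t+1}$ is not mistaken for an approximation.

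Finally, I would remark that iterating the identity over tokens (or summing over a minibatch or chunk) accumulates $S_t=S_0+\sum_{s}\hat k_s^{\top}\hat v_s$. This is the recurrent linear-attention state, and it is what the later examples use.
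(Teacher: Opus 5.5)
Your proposal is correct and follows the paper's proof essentially step for step. You compute $\nabla_{W_t}\mathcal{L}=\phi_t(k)^{\top}\,\partial\mathcal{L}/\partial f_t(k)$ by the chain rule through the bias-free linear last layer, substitute it into the gradient step to get $W_{t+1}=W_t+\phi_t(k)^{\top}g_t(k)$, and absorb the $\Theta$-update into $\phi_{t+1}$ when evaluating at $q$. Your entrywise check of the gradient and your explicit bookkeeping of $\phi_t$ versus $\phi_{t+1}$ make points that the paper leaves implicit, but they introduce no different idea.
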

The complete proof is provided in Appendix~\ref{app:proof_linearization}.

\begin{theorem}[Unrolling Inner-Loop Updates]
\label{thm:unrolling_recurrence}
Given a sequence of query–key pairs
$\{(q_0,k_0), (q_1,k_1), \ldots, (q_t,k_t)\}$,
suppose the TTT model performs one gradient descent step per input in sequence.
By repeated application of Theorem~\ref{thm:linearization}, the parameters after processing token $t$ are
\[
(W_{t+1}, \Theta_{t+1})
=
(W_0, \Theta_0)
-
\eta \sum_{i=0}^{t}
\nabla_{(W_i,\Theta_i)} \mathcal{L}\!\left(f_i(k_i)\right).
\]

Evaluating the TTT model on query $q_t$ yields
\begin{align*}
o_t
&= \phi_{t+1}(q_t)\, W_{t+1} \\
&= \phi_{t+1}(q_t)
\left(
W_0
+
\sum_{i=0}^{t}
\phi_i(k_i)^{\top} g_i(k_i)
\right),
\end{align*}
where $g_i(k_i)$ is defined as in Theorem~\ref{thm:linearization}.
This corresponds to the extended linear attention form on sequential inputs.
\[
o_t = \hat q_t \left( S_0 + \sum_{i=0}^t \hat k_i^{\top} \hat v_i \right).
\]
\end{theorem}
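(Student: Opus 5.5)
We prove Theorem~\ref{thm:unrolling_recurrence} by induction on $t$, tracking only the final-layer weight $W$ in closed form. The hidden parameters $\Theta$ are never expanded; they enter only through the time-indexed feature maps $\phi_i$.

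\textbf{Parameter sum.} The parameter identity is a telescoping sum. The update rule of Theorem~\ref{thm:linearization}, applied at step $i$ with key $k_i$, gives
\[
(W_{i+1},\Theta_{i+1}) - (W_i,\Theta_i) = -\eta \nabla_{(W_i,\Theta_i)} \mathcal{L}(f_i(k_i)).
\]
Summing over $i=0,\dots,t$ yields the stated expression for $(W_{t+1},\Theta_{t+1})$.

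\textbf{The $W$-block.} Since $f_i(x)=\phi_i(x)W_i$ is linear in $W_i$, the chain rule gives
\[
\nabla_{W_i}\mathcal{L}(f_i(k_i)) = \phi_i(k_i)^\top \frac{\partial \mathcal{L}}{\partial f_i(k_i)}.
\]
This is exactly the computation in the proof of Theorem~\ref{thm:linearization}. Hence
\[
W_{i+1} = W_i + \phi_i(k_i)^\top g_i(k_i).
\]
Unrolling by induction (base case $t=0$ is Theorem~\ref{thm:linearization} itself) gives
\[
W_{t+1} = W_0 + \sum_{i=0}^t \phi_i(k_i)^\top g_i(k_i).
\]

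\textbf{Output.} The model is evaluated at query $q_t$ after processing token $t$, so $o_t = f_{t+1}(q_t) = \phi_{t+1}(q_t)W_{t+1}$. Substituting the unrolled $W_{t+1}$ gives the displayed formula. Setting $\hat q_t=\phi_{t+1}(q_t)$, $\hat k_i=\phi_i(k_i)$, $\hat v_i=g_i(k_i)$, and $S_0=W_0$ puts it in the linear-attention form.

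\textbf{Main obstacle.} The only subtle point is interpretive, not computational. The $\Theta$-block of the gradient sum does not simplify, so $\phi_{t+1}$ and each $\phi_i$ depend on the whole history. We do not need to resolve this dependence: the claim is only that the output has the algebraic form $\hat q_t(S_0+\sum_i \hat k_i^\top \hat v_i)$, with each $\hat k_i,\hat v_i$ evaluated under the parameters current at step $i$ and $\hat q_t$ under the final parameters.

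Two further points need stating explicitly:
\begin{itemize}
\item The learning rate $\eta$ is taken constant across steps. A per-step $\eta_i$ only rescales $g_i$ and is absorbed into $\hat v_i$.
\item The loss at step $i$ depends on $W_i,\Theta_i$ only through $f_i(k_i)$. This justifies the chain-rule factorization above.
\end{itemize}
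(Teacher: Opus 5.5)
Your proposal is correct and takes essentially the same route as the paper. Both argue by induction on the token index using the per-step update $W_{i+1}=W_i+\phi_i(k_i)^\top g_i(k_i)$ from Theorem~\ref{thm:linearization}, then substitute the unrolled $W_{t+1}$ into $o_t=\phi_{t+1}(q_t)W_{t+1}$. Your telescoping justification of the full $(W,\Theta)$ parameter identity is a small addition that the paper leaves implicit.
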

The complete proof is provided in Appendix~\ref{app:proof_unrolling}.

Next, we extend Theorem~\ref{thm:unrolling_recurrence} to the case where the inner-loop employs gradient descent with momentum.
\begin{theorem}[Gradient Descent with Momentum]
\label{thm:momentum}
Given the momentum-augmented gradient accumulator defined as
\[
(\Delta W_t, \Delta \Theta_t)
=
\nabla_{(W,\Theta)} \mathcal{L}(f_t(k_t))
+
\alpha_t\,(\Delta W_{t-1}, \Delta \Theta_{t-1}),
\]
where $\alpha_t$ denotes the (possibly token-dependent) momentum factor at step $t$.
The model parameters are then updated according to
\[
(W_{t+1}, \Theta_{t+1})
=
(W_t, \Theta_t)
-
\eta\,(\Delta W_t, \Delta \Theta_t).
\]

Define the cumulative momentum coefficient as
\[
\beta_i^j \triangleq 
\begin{cases}
\prod_{s=i+1}^{j} \alpha_s & \text{if } i < j, \\
1 & \text{if } i = j.
\end{cases}
\]
Unrolling this recurrence and evaluating the TTT model on query $q_t$ yields
\begin{align*}
o_t
&= \phi_{t+1}(q_t)\, W_{t+1} \\
&= \phi_{t+1}(q_t)
\left(
W_0
+
\sum_{i=0}^{t}
\phi_i(k_i)^{\top} m_i(k_i)
\right),
\end{align*}
which induces a linear-attention–like form identical to that of Theorem~\ref{thm:unrolling_recurrence}, with the effective value vector being a momentum-weighted sum:
\[
\hat v_i =
m_i(k_i)
\;\triangleq\;
g_i(k_i) \cdot \sum_{j=i}^{t} \beta_i^j.
\]
\end{theorem}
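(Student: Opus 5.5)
The plan is to reduce everything to the $W$-component of the update, since the output $o_t = \phi_{t+1}(q_t)\,W_{t+1}$ depends on $\Theta_{t+1}$ only through the feature map $\phi_{t+1}$. That feature map is simply relabeled as $\hat q_t$ and needs no unrolling. The $\Theta$-component of the momentum recursion can therefore be carried along formally and ignored. As in the proof of Theorem~\ref{thm:linearization}, the first step is to apply the chain rule to the bias-free final layer $f_i(k_i)=\phi_i(k_i)W_i$, which gives
\[
\nabla_{W}\mathcal{L}(f_i(k_i)) = \phi_i(k_i)^{\top}\frac{\partial \mathcal{L}}{\partial f_i(k_i)},
\qquad\text{so}\qquad
-\eta\,\nabla_{W}\mathcal{L}(f_i(k_i)) = \phi_i(k_i)^{\top} g_i(k_i).
\]
Here the gradient is taken at the current iterate $(W_i,\Theta_i)$. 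I take this from Theorem~\ref{thm:linearization} without rederiving it.

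Next I unroll the accumulator, with the convention $\Delta W_{-1}=0$ (equivalently, $\alpha_0$ acts on a zero initial buffer). By induction on $i$,
\[
\Delta W_i = \sum_{j=0}^{i} \beta_j^{i}\, \nabla_W \mathcal{L}(f_j(k_j)).
\]
The inductive step is $\Delta W_i = \nabla_i + \alpha_i \sum_{j<i}\beta_j^{i-1}\nabla_j$, together with the identity $\alpha_i\beta_j^{i-1}=\beta_j^{i}$ for $j<i$ and $\beta_i^i=1$. Telescoping the parameter update then gives
\[
W_{t+1}=W_0-\eta\sum_{i=0}^{t}\Delta W_i .
\]

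The main step is exchanging the order of the double sum $\sum_{i=0}^{t}\sum_{j=0}^{i}$ into $\sum_{j=0}^{t}\sum_{i=j}^{t}$. Each per-token gradient $\nabla_j$ then appears with the scalar weight $\sum_{i=j}^{t}\beta_j^{i}$, which yields
\[
W_{t+1}=W_0+\sum_{j=0}^{t}\phi_j(k_j)^{\top} g_j(k_j)\sum_{i=j}^{t}\beta_j^{i}.
\]
Because the weight is a scalar, it can be absorbed into the value, giving $\hat v_j = m_j(k_j)$. Renaming indices and left-multiplying by $\phi_{t+1}(q_t)$ then gives the stated output and the linear-attention form of Theorem~\ref{thm:unrolling_recurrence}, with $\hat k_i=\phi_i(k_i)$ and $S_0=W_0$.

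I expect the index bookkeeping in this exchange to be the only delicate point: checking the boundary cases $i=j$ and $j=0$, and the convention for $\alpha_0$. I will also point out that the effective value $m_i$ depends on the horizon $t$ through the upper limit of the $\beta$-sum. The identity therefore holds for each fixed evaluation step $t$, rather than as a fixed per-token value reused across steps.
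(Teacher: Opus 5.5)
Your proposal is correct and follows the paper's proof essentially step for step. Both apply the chain rule on the bias-free last layer, unroll the accumulator by induction using $\alpha_t\beta_i^{t-1}=\beta_i^t$ with $\Delta W_{-1}=0$, telescope to $W_{t+1}=W_0-\eta\sum_{j=0}^{t}\Delta W_j$, and exchange the order of summation to absorb the scalar $\sum_{j=i}^{t}\beta_i^j$ into the effective value (and your observation that $m_i$ depends on the evaluation horizon $t$ is accurate and implicit in the paper's statement).
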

The complete proof is provided in Appendix~\ref{app:proof_momentum}.


\subsection{Explanation of TTT Empirical Behaviors}
\label{sec:explaining_empirical_observations}
Having shown that TTT can be rewritten as a linear attention operator (Theorem~\ref{thm:linearization} - \ref{thm:momentum}), we can now revisit the empirical behaviors that appeared contradictory under the memorization-based interpretation. The linear-attention perspective provides a unified and mechanistic explanation.
\PAR{More Inner-Loop Steps.}
According to Theorem~\ref{thm:linearization}, the inner loop does not perform storage of key--value information, but instead defines a learnable mapping that transforms the original inputs into the effective query, key, and value representations. This mapping depends on inner-loop hyperparameters, including the number of optimization steps. Increasing the number of inner-loop iterations at inference time therefore induces an attention operator different to the one used during training, naturally leading to degraded performance due to train--test mismatch rather than improved memorization.

\PAR{Gradient Ascent in the Inner Loop.}
Under the same formulation, replacing gradient descent with gradient ascent simply flips the sign of the effective value vector $g_t$. Since this sign is absorbed into the learned attention operator and the mapping itself is optimized under the downstream objective, the model adapts to this change. This explains why TTT remains effective under gradient ascent, despite the absence of a meaningful memorization objective.

\PAR{Distributional Asymmetry Between $Q$ and $K$.}
The linear-attention view clarifies why similarity between $q$ and $k$ is not required. In TTT, $q$ and $k$ influence different components of the attention operator: $q$ determines the effective query via $\phi_{t+1}(q)$, while $k$ determines the effective key and value via $\phi_t(k)$ and $g_t(k)$, respectively. They are therefore intermediate features rather than symmetric query--key representations, making distributional mismatch expected rather than pathological.

\PAR{Replacing $Q$ with $K$.}
Finally, replacing $q$ with $k$ does not collapse the attention mechanism because the effective query and key remain distinct: $\phi_{t+1}(k)$ versus $\phi_t(k)$. Since $\phi$ is learnable and evaluated at different parameter states, the model can map the same input to different representations, preserving attention functionality.

\PAR{Summary.}
Viewed through the lens of linear attention, the inner loop of TTT parameterizes a structured linear-form attention operator rather than performing test-time storage and retrieval. Under this perspective, the observed empirical behaviors follow naturally from representation learning and train--test consistency considerations.

\subsection{Example: LaCT as Linear Attention}
\label{sec:example_LaCT}
We now show how a representative instantiation of TTT formula, LaCT~\cite{zhang2025test}, can be rewritten in the form of linear attention.

LaCT adopts a bias-free SwiGLU MLP~\cite{shazeer2020glu} as its inner-loop mapping, parameterized by three learnable weight matrices
$W_0, W_2 \in \mathbb{R}^{D_h \times D_k}$ and $W_1 \in \mathbb{R}^{D_h \times D_v}$:
\[
f(x) = \bigl(\mathrm{silu}(x W_0)\odot(x W_2)\bigr) W_1 .
\]
The inner-loop objective is defined via the Frobenius inner product
\[
\mathcal{L}(f(k), v) = - \langle f(k), v \rangle .
\]

At each token, LaCT performs gradient descent with per-token learning rate $\eta_t$, momentum $\alpha_t$, and gradient orthogonalization $\mathcal{M}(\cdot)$ inspired by Muon~\cite{jordan2024muon}:
\[
W_{i,t+1} = W_{i,t} - \eta_t\, \mathcal{M}(\Delta W_{i,t}),
\]
\[
\Delta W_{i,t} = \nabla_{W_{i,t}} \mathcal{L}(f_t(k_t), v_t),
\]
for $i \in \{0,1,2\}$.

Following Theorem~\ref{thm:momentum}, the inner-loop model at step $t$ can be written as
\[
f_t(x) = \phi_t(x)\, W_{1,t}, \quad
\phi_t(x) = \mathrm{silu}(x W_{0,t})\odot(x W_{2,t}).
\]
Evaluating the updated model on query $q_t$ yields
\begin{align}
o_t
&= f_{t+1}(q_t) \notag \\
&= \phi_{t+1}(q_t)\, W_{1,t+1} \notag \\
&= \phi_{t+1}(q_t)
\left(
W_{1,0}
+
\sum_{i=0}^{t}
\mathcal{M}\!\left(
\phi_i(k_i)^{\top} m_i
\right)
\right), \label{eq:lact_update}
\end{align}
where
\[
m_i(k_i)
\;\triangleq\;
v_i \cdot \sum_{j=i}^{t} \eta_j\, \beta_i^j.
\]

This expression reveals that the inner loop of LaCT is effectively a linear attention–like operator, 
with $\phi_i(k_i)$ and $m_i(k_i)$ playing the roles of keys and values, 
and $\phi_{t+1}(q_t)$ acting as the query vector. 
Note that LaCT also applies weight normalization after each update, which we omit here for simplicity. 
See Appendix~\ref{app:lact_derivation} for more details.

\subsection{Example: ViTTT as Linear Attention}
\label{sec:example_ViTTT}
We next show that another instantiation of test-time training, ViTTT~\cite{han2025vit}, can likewise be rewritten in the form of linear attention.

ViTTT employs fast weights consisting of two independent components: (i) a simplified gated linear unit (GLU), and (ii) a depthwise convolution layer. These components are updated independently in the inner loop. We show that each admits a linear-attention interpretation, implying that ViTTT as a whole falls within the same framework.

\PAR{GLU component.}
The GLU is defined as
\[
f(x) = \mathrm{silu}(x W_0)\odot(x W_1),
\]
where $W_0$ and $W_1$ are fast weights updated via gradient descent. As in LaCT, the inner-loop loss is defined using a Frobenius inner product. Following a derivation analogous to previous sections (see Appendix~\ref{app:glu_linear_attn}), evaluating the updated GLU on a query $q_t$ yields
\[
\phi(x) = \mathrm{silu}(x W_0),
\]
\begin{align*}
o_t
&= \phi_{t+1}(q_t)\odot
\Bigl(
q_t \bigl(
W_1 + k_t^{\top}(v_t \odot \phi_t(k_t))
\bigr)
\Bigr).
\end{align*}
This expression takes the form of linear attention, with $\phi_t(k_t)$ acting as a multiplicative gate on values and $\phi_{t+1}(q_t)$ gating the final output.

\PAR{Depthwise convolution component.}
ViTTT additionally includes a $3\times3$ depthwise convolution layer with fast weights, updated in the inner loop. As convolution is effectively sliding window linear layer, this TTT component is equivalent to a sliding window linear attention. A formal derivation is provided in Appendix~\ref{app:conv_linear_attn}.

\PAR{Discussion.}
Since both fast-weight components in ViTTT admit linear-attention formulations, their combination also induces a linear-attention--like operator. This establishes ViTTT as another concrete instance of test-time training whose behavior is more naturally understood through the lens of linear attention.

\section{Practical Implications}\label{sec:practical_implications}

\begin{table*}[t!]
\centering
\caption{
\textbf{Ablation Trajectory Reducing TTT to Standard Linear Attention.}
By progressively simplifying complex TTT formulations into standard linear attention (Variant 6), we quantify the contribution of each design component in TTT. Variant 1 achieves the best performance across all three tasks. Variants 2–6 admit parallel implementations. We additionally report the inference throughput of each variant’s TTT layer on the LLM task. † denotes ablations that do not apply, in which case performance matches the preceding variant. * indicates that ViTTT does not use gradient orthogonalization, so we ablate gradient normalization instead.
}\label{tab:ablation}
\vspace{-0.6em}
\resizebox{\textwidth}{!}{%
\begin{tabular}{llccccc}
\toprule
Alias & Description of TTT Inner-loop Designs & \makecell{Perplexity $\downarrow$\\(LaCT-LLM)} & \makecell{PSNR $\uparrow$\\(LaCT-NVS)} & \makecell{Top-1 Acc$\uparrow$\\(ViTTT)}  & \makecell{Tokens Per Sec$\uparrow$\\(Recurrent Impl.)} & \makecell{Tokens Per Sec$\uparrow$\\(Parallel Impl.)} \\
\midrule
Baseline    & LaCT~\cite{zhang2025test} / ViTTT~\cite{han2025vit}             & 16.43          & 25.94          & 79.34\%          & 4.30M  & N/A \\
Variant 1   & Baseline w/ only update the last layer parameters               & \textbf{15.93} & \textbf{25.97} & \textbf{79.63\%} & 10.60M  & N/A \\
Variant 2   & Variant 1 w/ remove weight normalization                        & 16.31          & 25.93          & \textbf{79.63\%}\rlap{$^\dagger$}  & 11.02M  & 30.18M  \\
Variant 3   & Variant 2 w/ multi-layer MLP $\rightarrow$ singel linear layer  & 16.23          & 25.71          & 79.39\%          & 12.95M  & 49.69M  \\
Variant 4   & Variant 3 w/ remove per-token learnable lr                   & 16.12             & 25.70          & 79.39\%\rlap{$^\dagger$} & 13.31M  & 53.99M \\
Variant 5   & Variant 4 w/ remove momentum in SGD                          & 15.97             & 25.70\rlap{$^\dagger$} & 79.39\%\rlap{$^\dagger$}  & 14.40M  & 57.28M \\
Variant 6   & Variant 5 w/ remove gradient orthogonalization               & 16.80             & 25.73          & 79.54\%\rlap{*}         & \textbf{89.67M} & \textbf{124.6M}  \\
\bottomrule
\end{tabular}%
}
\vspace{-0.3em}
\end{table*}

\begin{figure}[t!]
\centering
\includegraphics[width=1.0\columnwidth]{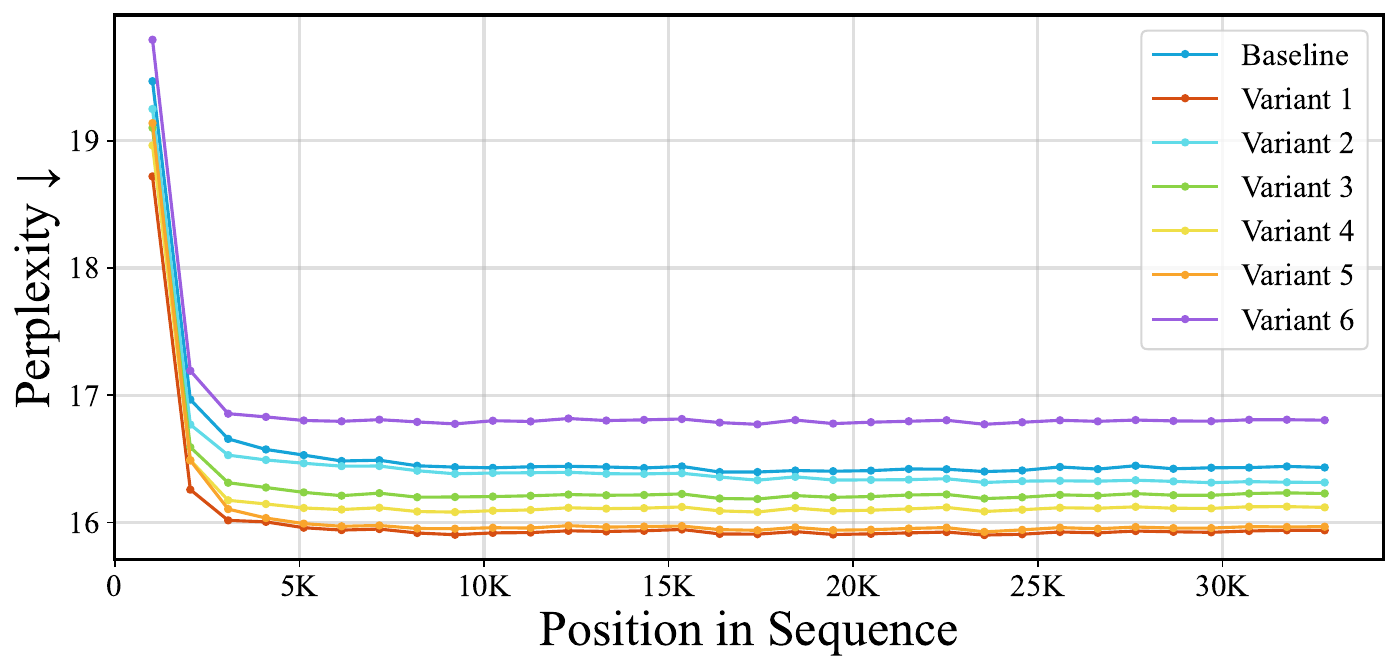}
\caption{\textbf{Perplexity Metric for Ablation on LaCT-LLM.} Evaluated on 2.5B tokens from the Book-3 dataset.
\label{fig:ppl_comparison}
}\label{fig:per_position_ppl}
\end{figure}

Interpreting TTT through the lens of linear attention is not merely a theoretical exercise but yields concrete practical benefits. This perspective reveals a systematic trajectory for reducing complex TTT formulations into linear attention, along which we observe that several commonly adopted design choices, such as per-token learnable learning rates, weight normalization, are not essential to final performance. By clarifying which components are truly valuable and which are unnecessary, this view enables substantial simplification of TTT formulations. Moreover, recognizing TTT as linear attention makes explicit that the seemingly recurrent inner-loop updates admit a parallel formulation, leading to significant efficiency gains in both training and inference. All experiments in this section use the official implementations of LaCT~\cite{zhang2025test} for the LLM and NVS tasks, and ViTTT~\cite{han2025vit} for the image recognition task. Detailed experimental settings are provided in Appendix~\ref{app:setup}.

\subsection{Reduce TTT to Linear Attention}\label{sec:ttt_ablation}

Viewing TTT through the lens of linear attention reveals that several design choices that were justified under a storage-and-retrieval interpretation are in fact redundant or optional. This perspective enables a principled ablation path that progressively simplifies complex TTT variants such as LaCT and ViTTT into standard linear attention. Below, we outline this reduction trajectory and analyze the role of each component. The ablation steps are applied \emph{sequentially}: Variant~1 is the Baseline with Step~1 applied, Variant~2 adds Step~2 on top of Variant~1, and so on, so that Variant~6 corresponds to the full reduction from the original TTT formulation to standard linear attention.

\PAR{Step 1. Update only the last-layer parameters.}
As shown in Theorem~\ref{thm:linearization}, the effective query and key vectors are $\phi_{t+1}(q)$ and $\phi_t(k)$, where $\phi_t(\cdot)\triangleq\phi(\cdot;\Theta_t)$. Updating $\Theta_t$ within the inner loop makes $\phi_t$ a dynamic kernel that is difficult to unroll analytically. If the inner loop instead updates only the final-layer parameter, $\Theta$ remains fixed during TTT, and $\phi(\cdot)\triangleq\phi(\cdot;\Theta)$ becomes a static function with learnable parameters. From the linear-attention perspective, $\phi(\cdot)$ then acts as a learnable kernel function with effective queries and keys given by $\phi(q)$ and $\phi(k)$.

\PAR{Step 2. Remove weight normalization.}
LaCT applies weight normalization to all learnable parameters ${\Theta_t, W_t}$ after each inner-loop update. After Step~1, normalization on $\Theta_t$ is a no-op since $\Theta_t$ is fixed. Normalizing the final-layer parameter $W_t$, which corresponds to the state in the linear attention view, is therefore equivalent to normalizing the state $S_t$ (see Appendix~\ref{app:weight_normalization}). As such normalization is uncommon in linear attention literature, we remove it for ablation. Notably, after this step the TTT formulation becomes fully parallelizable, as discussed in Section~\ref{sec:parallel_form}.

\PAR{Step 3. Multi-layer MLP $\rightarrow$ single linear layer.}
Several TTT variants~\cite{han2025vit,behrouz2026titans} employ deeper inner-loop MLPs, but report inconsistent empirical gains. From a linear attention perspective, increasing MLP depth simply induces a more complex kernel function $\phi(\cdot)$ over queries and keys. When $q$ and $k$ already have sufficient representational capacity, this added complexity is unlikely to help. We therefore reduce the multi-layer MLP to a single linear layer, effectively removing $\phi(\cdot)$ altogether. This exposes the true queries and keys as $\hat q = q$ and $\hat k = k$, bringing the formulation closer to basic linear attention.

\PAR{Step 4. Remove per-token learning rates.}
Many TTT methods~\cite{behrouz2026titans,zhang2025test} introduce a per-token learnable learning rate $\eta_t$. As shown in Section~\ref{sec:example_LaCT}, with Frobenius dot product as inner loss this can be absorbed into the learnable $v_t$, indicating it's functionally redundant. Consistent with our finding, ViTTT empirically finds that a constant learning rate of $1.0$ suffices.

\PAR{Step 5. Remove momentum in SGD.}
As shown in Theorem~\ref{thm:momentum}, adding momentum to the inner-loop SGD update, as done in LaCT and related methods~\cite{behrouz2026titans}, only alters the effective value $\hat v$, from the instantaneous gradient $g_t(k)$ to a momentum-weighted sum of past gradients. From the linear attention perspective, this corresponds to remixing historical key–value contributions into a single value vector. Since both keys and values are already learnable, this additional mixing is unlikely to provide meaningful benefit, thus we remove momentum for ablation. Notably, for both LaCT and ViTTT, $g_t(k) = -v$ and removing momentum recovers $\hat v = v$, exposing the true value vector.

\PAR{Step 6. Remove gradient orthogonalization.}
As discussed in Section~\ref{sec:example_LaCT}, LaCT optionally applies gradient orthogonalization $\mathcal{M}(\Delta W)$. Under the linear attention reformulation, this corresponds to applying an operator to the state update $\mathcal{M}(\hat k^\top v)$. We remove this operation for ablation. After this final step, Both LaCT and ViTTT reduce exactly to standard linear attention:
$
o = q\!\left(W + \sum_i k_i^\top v_i\right).
$

\PAR{Results.}
We progressively apply the above ablations to LaCT on the LLM and NVS tasks, and to ViTTT on the image recognition task, with results summarized in Table~\ref{tab:ablation}. Surprisingly, restricting the inner loop to update only the final MLP layer consistently yields the best overall performance across tasks, suggesting that many of the more complex design choices are unnecessary, or even detrimental. Most other components contribute only marginally to performance, with two notable exceptions: deeper MLPs are beneficial for the NVS task, while gradient orthogonalization improves performance on the LLM task. Overall, reducing the full TTT formulation to a basic linear attention operator (Variant 6) results in only minor performance degradation ($+0.4$ perplexity on LLM and $-0.2$ dB on NVS). For the LLM task, Table~\ref{tab:ablation} reports perplexity at 32k sequence length, with results across other lengths shown in Figure~\ref{fig:ppl_comparison}.

\begin{figure}[t]
    \centering
    \includegraphics[width=1.0\linewidth]{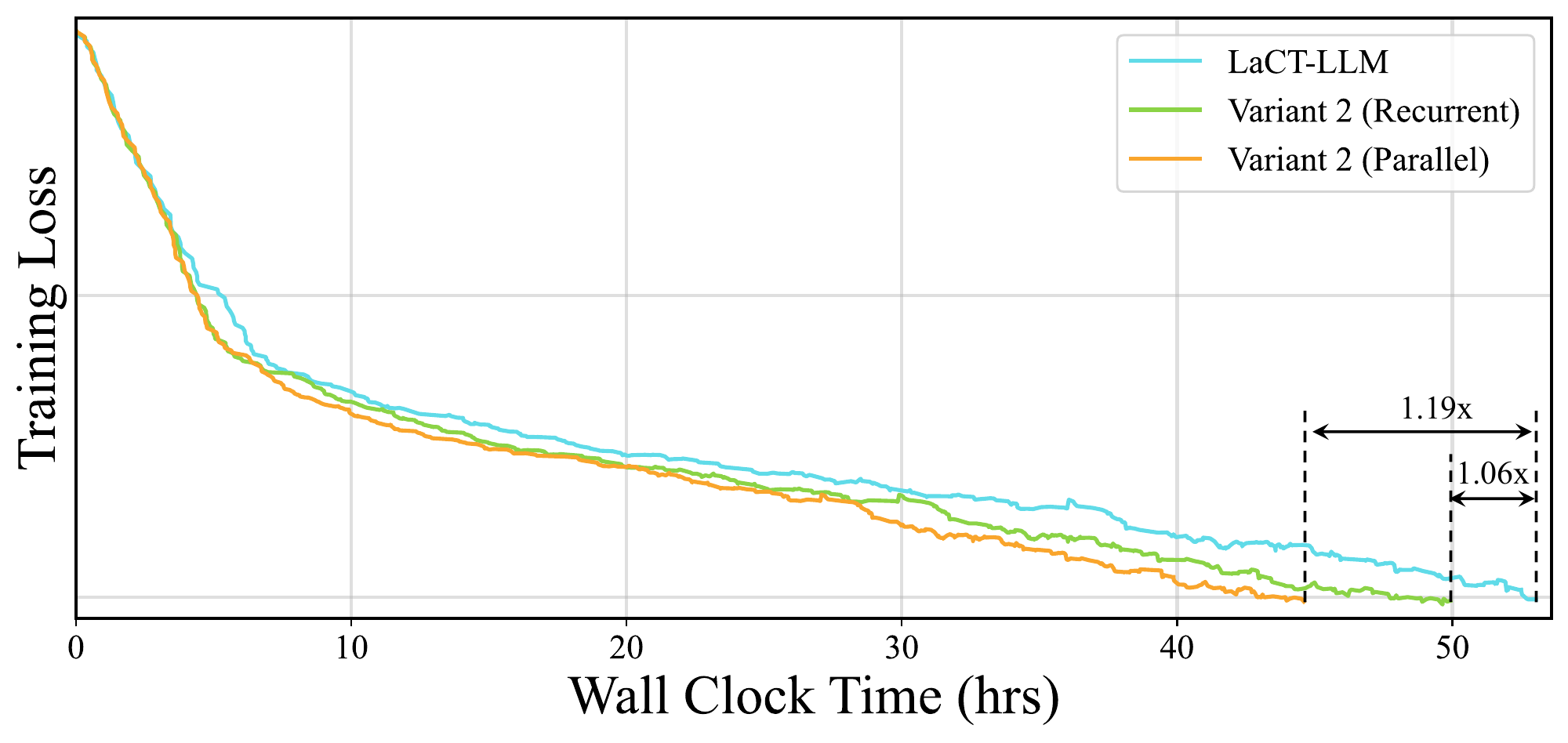}
    \caption{
    \textbf{Training loss vs. wall-clock time on LaCT-LLM.}
    We compare the original LaCT-TTT with both parallel and recurrent form of Variant 2. 
    The parallel form achieves a $1.19\times$ end-to-end speedup while maintaining comparable convergence. 
    }\label{fig:loss_vs_time}
\end{figure}

\vspace{-0.6em}
\subsection{Parallel Form of TTT}\label{sec:parallel_form}

Existing TTT variants are typically implemented in a recurrent manner, reflecting their original storage-and-retrieval interpretation. However, as we have shown that TTT can alternatively be viewed as a form of linear attention, a natural question arises: can TTT admit a parallel formulation that enables more efficient implementation? We show that under certain conditions, corresponding to Variants~2–6 in our ablation, such a parallel form indeed exists.

The key insight is that when weight normalization is removed and only the final-layer parameters are updated, the state update becomes associative. In this setting, the kernel function $\phi_t(\cdot)\triangleq\phi(\cdot;\Theta_t)$ is static and independent of sequence history, allowing the recurrence in Theorem~\ref{thm:momentum} to be computed via a parallel prefix scan rather than sequential token-by-token updates.

We implement this parallel formulation for LaCT on the LLM task. As shown in Table~\ref{tab:ablation}, switching from the recurrent to the parallel implementation improves the inference throughput of the TTT layer by up to $4.0\times$ (measured in tokens per second, single batch). Combined with the simplifications from Step~1 and Step~2, this yields a $1.19\times$ end-to-end training speedup without degrading model quality, as shown in Figure~\ref{fig:loss_vs_time}.

We provide the full parallel formulation and a proof of equivalence to the sequential recurrence in Appendix~\ref{app:parallel_form_proof}. In Appendix~\ref{app:non_reducible}, we further show that introducing weight normalization or dynamic kernel functions breaks associativity, thereby preventing parallelization.

\vspace{-0.3em}
\section{Conclusion}

In this work, we challenge the prevailing view of TTT as a mechanism for test-time memorization of key–value mappings. Through systematic empirical analysis, we identify several anomalies—including gradient ascent behavior, distributional asymmetry between queries and keys, and the lack of correlation between inner-loop convergence and downstream performance—that are fundamentally incompatible with a memorization-based interpretation.

We provide an alternative explanation by showing that TTT, even with complex inner loops involving multi-layer MLPs and momentum-based optimizers, can be analytically rewritten as a form of linear attention operator. Under this view, the inner loop does not perform meta learning in the conventional sense, but instead parameterizes a structured mixing of queries, keys, and values. Viewing TTT through the lens of linear attention not only resolves the observed paradoxes, but also enables principled simplifications, parallel implementations with improved efficiency, and a unified framework for understanding TTT variants.

Our analysis applies broadly to TTT formulations based on key--value binding, but is limited to settings where the inner-loop final layer is linear and bias-free. Empirically, we focus on LaCT~\cite{zhang2025test} and ViTTT~\cite{han2025vit} as case studies because they are representative methods with open-source implementations. Other key--value-binding variants such as Titans~\cite{behrouz2026titans} and Atlas~\cite{behrouz2025atlas} also satisfy the assumptions of our theorems, and we expect our results to extend to them; empirical validation on these models is left to future work as their implementations become available. Extending these insights to nonlinear final layers, and exploring deeper connections between TTT and modern linear attention mechanisms in both directions, remain additional avenues for future work.

\section*{Acknowledgements}
We gratefully acknowledge the Vector Institute for providing compute resources for part of this work. We thank Tianyuan Zhang for sharing experimental details and insightful discussions, and Yu Sun for reviewing an early version of this paper.

\section*{Impact Statement}
Our work simplifies and parallelizes existing TTT architectures, potentially reducing the computational cost and energy consumption of training and deploying such models. Because the contributions are primarily analytical and efficiency-oriented, we do not foresee additional societal risks beyond those already associated with the underlying sequence modeling architectures and their downstream applications.


\bibliography{main}
\bibliographystyle{icml2026}

\newpage
\appendix
\onecolumn
\section{Experiment Setup}\label{app:setup}
Here we outline the three tasks used to evaluate the TTT framework, along with the experimental setup.

\PAR{Language Modeling}
We use the 760M parameter LaCT-LLM~\cite{zhang2025test} as our baseline model.
All models are trained on 100B tokens sampled from the FineWeb-Edu dataset~\cite{penedo2024fineweb}
with a batch size of 4 per GPU across 8 NVIDIA A100 GPUs for 20K iterations,
which takes approximately 56 hours to complete.
All other hyperparameters follow the original~\cite{zhang2025test} configuration.
For evaluation, we report perplexity on 2.5B tokens from the Book-3 dataset~\cite{gao2020pile}.
All implementations are based on the Flame~\cite{yang2025flame} codebase.

\PAR{Novel View Synthesis}
We use 12-layer 768 hidden dimension LaCT-NVS~\cite{zhang2025test} model as baseline, totaling 114M parameters.
We train our model on the RealEstate10K~\cite{zhou2018stereo} dataset
with a batch size of 128 per GPU across 4 NVIDIA A100 GPUs for 20K iterations, 
which takes approximately 38 hours to complete. 
We use 2 input views and 6 target views for training, and 2 input views and 3 target views for evaluation. 
All images are resized to $128 \times 128$ resolution.
We train using only the MSE loss and evaluate using the Peak Signal-to-Noise Ratio (PSNR) metric.
All other hyperparameters follow the original LaCT-NVS~\cite{zhang2025test} configuration.

\PAR{Image Classification}
We use ViTTT-B~\cite{han2025vit} as our baseline model, totaling 90M parameters.
Follow~\cite{han2025vit}, we train our model on the ImageNet-1K~\cite{deng2009imagenet} dataset
with a batch size of 256 per GPU across 2 NVIDIA H100 GPUs for 60 epochs,
which takes approximately 16 hours to complete.
We evaluate on the ImageNet-1K~\cite{deng2009imagenet} validation set using the top-1 accuracy.
All other hyperparameters follow the original~\cite{han2025vit} configuration.

\section{Proof of Theorem~\ref{thm:linearization}}\label{app:proof_linearization}

\begin{proof}
We explicitly unroll the single-step inner-loop update. Using the shorthand notation $\phi_t(\cdot) \triangleq \phi(\cdot;\Theta_t)$ and $f_t(\cdot) \triangleq \phi_t(\cdot) W_t$, the model output for the key input $k$ at step $t$ is
\[
f_t(k) = \phi_t(k)\, W_t.
\]
By the chain rule, the gradient of the last layer with respect to the loss $\mathcal{L}(f_t(k))$ is
\[
\nabla_{W_t} \mathcal{L} = \phi_t(k)^{\top} \frac{\partial \mathcal{L}}{\partial f_t(k)}.
\]

Applying one step of gradient descent with learning rate $\eta$ yields
\begin{align*}
W_{t+1}
&= W_t - \eta \phi_t(k)^{\top} \frac{\partial \mathcal{L}}{\partial f_t(k)} \\
&= W_t + \phi_t(k)^{\top} g_t(k)
\end{align*}
where we define
\[
g_t(k) \triangleq -\eta\, \frac{\partial \mathcal{L}}{\partial f_t(k)} \in \mathbb{R}^{D_{\mathrm{out}}}.
\]
The kernel function parameters $\Theta$ are also updated with gradient descent, yielding the updated function $\phi_{t+1}(\cdot) = \phi(\cdot;\Theta_{t+1})$.

Evaluating the updated model on a query $q$ gives
\begin{align*}
o &= \phi_{t+1}(q)\, W_{t+1} \\
&= \phi_{t+1}(q)\, \big(W_t + \phi_t(k)^{\top} g_t(k)\big)
\end{align*}
This expression is a linear attention operator of the form
\[
o = \hat q \left( S_0 + \hat k^{\top} \hat v \right),
\]
where
\[
\hat q = \phi_{t+1}(q), \quad
\hat k = \phi_t(k), \quad
\hat v = g_t(k), \quad
S_0 = W_t .
\]
\end{proof}

\section{Proof of Theorem~\ref{thm:unrolling_recurrence}}\label{app:proof_unrolling}

\begin{proof}
We prove by induction on the number of tokens processed.

\paragraph{Base case.}
Starting from initial parameters $(W_0, \Theta_0)$, processing token $0$ with key $k_0$ applies one gradient descent step. By Theorem~\ref{thm:linearization}:
\[
W_1 = W_0 + \phi_0(k_0)^{\top} g_0(k_0),
\]
where $g_0(k_0) = -\eta\, \frac{\partial \mathcal{L}}{\partial f_0(k_0)}$, and $\Theta_1$ is updated accordingly. This matches the claimed form with $t=0$.

\paragraph{Inductive step.}
Assume that after processing tokens $0, \ldots, t-1$, we have
\[
W_t = W_0 + \sum_{i=0}^{t-1} \phi_i(k_i)^{\top} g_i(k_i).
\]
Processing token $t$ with key $k_t$ applies another gradient descent step. By Theorem~\ref{thm:linearization}:
\[
W_{t+1} = W_t + \phi_t(k_t)^{\top} g_t(k_t).
\]
Substituting the inductive hypothesis:
\[
W_{t+1} = W_0 + \sum_{i=0}^{t-1} \phi_i(k_i)^{\top} g_i(k_i) + \phi_t(k_t)^{\top} g_t(k_t) = W_0 + \sum_{i=0}^{t} \phi_i(k_i)^{\top} g_i(k_i).
\]

\paragraph{Output computation.}
Evaluating the model on query $q_t$ with the updated parameters gives:
\begin{align*}
o_t &= \phi_{t+1}(q_t)\, W_{t+1} \\
&= \phi_{t+1}(q_t) \left( W_0 + \sum_{i=0}^{t} \phi_i(k_i)^{\top} g_i(k_i) \right)
\end{align*}
This is the extended linear attention form:
\[
o_t = \hat q_t \left( S_0 + \sum_{i=0}^t \hat k_i^{\top} \hat v_i \right),
\]
where $\hat q_t = \phi_{t+1}(q_t)$, $\hat k_i = \phi_i(k_i)$, $\hat v_i = g_i(k_i)$, and $S_0 = W_0$.
\end{proof}

\section{Proof of Theorem~\ref{thm:momentum}}\label{app:proof_momentum}

\begin{proof}
We prove by induction, extending the approach of Theorem~\ref{thm:unrolling_recurrence} to momentum updates.

\paragraph{Base case.}
At $t=0$, the momentum accumulator is initialized as $\Delta W_{-1} = 0$, so
\[
\Delta W_0 = \nabla_{W} \mathcal{L}(f_0(k_0)) = \phi_0(k_0)^\top \frac{\partial \mathcal{L}}{\partial f_0(k_0)}.
\]
Applying the update rule $W_1 = W_0 - \eta \Delta W_0$ gives
\[
W_1 = W_0 + \phi_0(k_0)^\top g_0(k_0),
\]
where $g_0(k_0) = -\eta\, \frac{\partial \mathcal{L}}{\partial f_0(k_0)}$. This matches the claimed form with $m_0(k_0) = g_0(k_0)$.

\paragraph{Inductive step.}
Define the cumulative momentum coefficient from step $i$ to step $j$ as:
\[
\beta_i^j \triangleq 
\begin{cases}
\prod_{s=i+1}^{j} \alpha_s & \text{if } i < j, \\
1 & \text{if } i = j.
\end{cases}
\]
Assume that after processing tokens $0, \ldots, t-1$, the momentum accumulator satisfies
\[
\Delta W_{t-1} = \sum_{i=0}^{t-1} \beta_i^{t-1}\, \phi_i(k_i)^\top \frac{\partial \mathcal{L}}{\partial f_i(k_i)}.
\]
At step $t$, the momentum update gives
\begin{align*}
\Delta W_t &= \nabla_{W} \mathcal{L}(f_t(k_t)) + \alpha_t \Delta W_{t-1} \\
&= \phi_t(k_t)^\top \frac{\partial \mathcal{L}}{\partial f_t(k_t)} + \alpha_t \sum_{i=0}^{t-1} \beta_i^{t-1}\, \phi_i(k_i)^\top \frac{\partial \mathcal{L}}{\partial f_i(k_i)}.
\end{align*}
For terms $i \in [0, t-1]$, we have $\alpha_t \cdot \beta_i^{t-1} = \alpha_t \prod_{s=i+1}^{t-1} \alpha_s = \prod_{s=i+1}^{t} \alpha_s = \beta_i^t$. For the $i = t$ term, the coefficient is $\beta_t^t = 1$. Thus:
\[
\Delta W_t = \sum_{i=0}^{t} \beta_i^t\, \phi_i(k_i)^\top \frac{\partial \mathcal{L}}{\partial f_i(k_i)}.
\]

\paragraph{Weight accumulation.}
From $W_{j+1} = W_j - \eta \Delta W_j$, we have $W_{t+1} = W_0 - \eta \sum_{j=0}^{t} \Delta W_j$. Substituting the closed form for $\Delta W_j$ and exchanging the order of summation:
\begin{align*}
W_{t+1} &= W_0 - \eta \sum_{j=0}^{t} \sum_{i=0}^{j} \beta_i^j\, \phi_i(k_i)^\top \frac{\partial \mathcal{L}}{\partial f_i(k_i)} \\
&= W_0 + \sum_{i=0}^{t} \phi_i(k_i)^\top \underbrace{\left( -\eta \frac{\partial \mathcal{L}}{\partial f_i(k_i)} \sum_{j=i}^{t} \beta_i^j \right)}_{m_i(k_i)}.
\end{align*}
The momentum-weighted effective value is thus
\[
m_i(k_i) = g_i(k_i) \cdot \sum_{j=i}^{t} \beta_i^j.
\]

\paragraph{Output computation.}
Evaluating the model on query $q_t$ with the updated parameters:
\begin{align*}
o_t &= \phi_{t+1}(q_t)\, W_{t+1} \\
&= \phi_{t+1}(q_t) \left( W_0 + \sum_{i=0}^{t} \phi_i(k_i)^\top m_i(k_i) \right).
\end{align*}
This is the linear attention form with $\hat q_t = \phi_{t+1}(q_t)$, $\hat k_i = \phi_i(k_i)$, $\hat v_i = m_i(k_i)$, and $S_0 = W_0$.
\end{proof}

\section{Derivation: LaCT as Linear Attention}\label{app:lact_derivation}

We provide the full derivation showing how LaCT~\cite{zhang2025test} can be written in the form of linear attention, following the framework of Theorem~\ref{thm:momentum}.

\subsection{LaCT Architecture and Update Rule}

LaCT adopts a bias-free SwiGLU MLP as its inner-loop mapping:
\[
f(x) = \bigl(\mathrm{silu}(x W_0)\odot(x W_2)\bigr) W_1,
\]
where $W_0, W_2 \in \mathbb{R}^{D_h \times D_k}$ and $W_1 \in \mathbb{R}^{D_h \times D_v}$.
The inner-loop objective uses the Frobenius inner product:
\[
\mathcal{L}(f(k), v) = -\langle f(k), v \rangle.
\]

At each token $t$, LaCT performs gradient descent with per-token learning rate $\eta_t$, momentum $\alpha_t$, and Muon-style~\cite{jordan2024muon} gradient orthogonalization $\mathcal{M}(\cdot)$:
\[
\Delta W_{i,t} = \nabla_{W_{i,t}} \mathcal{L}(f_t(k_t), v_t) + \alpha_t \Delta W_{i,t-1}, \quad
W_{i,t+1} = W_{i,t} - \eta_t\, \mathcal{M}(\Delta W_{i,t}),
\]
for $i \in \{0,1,2\}$.

\subsection{Gradient Computation}

The upstream gradient with respect to the output is $\frac{\partial \mathcal{L}}{\partial f_t(k_t)} = -v_t$.
Using the chain rule, the gradient for the final layer $W_1$ is:
\[
\nabla_{W_1} \mathcal{L} = \phi_t(k_t)^\top \frac{\partial \mathcal{L}}{\partial f_t(k_t)} = -\phi_t(k_t)^\top v_t,
\]
where $\phi_t(x) = \mathrm{silu}(x W_{0,t})\odot(x W_{2,t})$ is the kernel function.

\subsection{Linear Attention Form with Muon}

Following the same induction as Theorem~\ref{thm:momentum}, but with the Muon orthogonalization $\mathcal{M}(\cdot)$ applied after each gradient accumulation, the weight update becomes:
\[
W_{1,t+1} = W_{1,0} - \sum_{j=0}^{t} \eta_j\, \mathcal{M}(\Delta W_{1,j}).
\]
Using the cumulative momentum coefficient $\beta_i^t$ defined in Theorem~\ref{thm:momentum}, the momentum accumulator for $W_1$ is:
\[
\Delta W_{1,t} = -\phi_t(k_t)^\top v_t + \alpha_t \Delta W_{1,t-1} = -\sum_{i=0}^{t} \beta_i^t\, \phi_i(k_i)^\top v_i.
\]

Substituting into the weight update and exchanging the order of summation (as in the proof of Theorem~\ref{thm:momentum}):
\[
W_{1,t+1} = W_{1,0} + \sum_{i=0}^{t} \mathcal{M}\!\left( \phi_i(k_i)^\top m_i \right),
\]
where the momentum-weighted effective value is:
\[
m_i \triangleq v_i \cdot \sum_{j=i}^{t} \eta_j\, \beta_i^j.
\]

Evaluating on query $q_t$:
\begin{align*}
o_t &= \phi_{t+1}(q_t)\, W_{1,t+1} \\
&= \phi_{t+1}(q_t) \left( W_{1,0} + \sum_{i=0}^{t} \mathcal{M}\!\left( \phi_i(k_i)^\top m_i \right) \right).
\end{align*}

This is a linear attention--like form where $\phi_i(k_i)$ and $m_i$ play the roles of keys and values, and $\phi_{t+1}(q_t)$ acts as the query. The Muon orthogonalization $\mathcal{M}(\cdot)$ is applied element-wise to each key-value outer product before accumulation.

\subsection{Effect of Weight Normalization}\label{app:weight_normalization}

LaCT additionally applies weight normalization to $W_1$ after each update:
\[
W_{1,t+1} = \mathrm{Norm}\!\left( W_{1,t} - \eta_t\, \mathcal{M}(\Delta W_{1,t}) \right),
\]
where $\mathrm{Norm}(\cdot)$ applies channel-wise $\ell_2$ normalization.

Weight normalization does \emph{not} break the linear attention perspective. Recall that in the linear attention formulation, the state matrix $S_t = W_{1,t}$ accumulates key-value outer products. With weight normalization, we simply normalize this state after each update:
\[
S_{t+1} = \mathrm{Norm}\!\left( S_t + \phi_t(k_t)^\top m_t \right).
\]
The output remains a linear function of the (normalized) state:
\[
o_t = \phi_{t+1}(q_t)\, S_{t+1}.
\]
Thus, LaCT with weight normalization is still a linear attention mechanism---the query linearly reads from a state that accumulates key-value information.

However, the normalization prevents expressing the state as a \emph{simple sum} over history. Unlike the unnormalized case where $S_{t+1} = S_0 + \sum_{i=0}^{t} \phi_i(k_i)^\top m_i$, the nested normalization creates a sequential dependency:
\begin{align*}
S_{t+1} &= \mathrm{Norm}\!\left( \mathrm{Norm}\!\left( S_{t-1} + \phi_{t-1}(k_{t-1})^\top m_{t-1} \right) + \phi_t(k_t)^\top m_t \right).
\end{align*}
This has implications for parallelization, which we discuss in Appendix~\ref{app:non_reducible}.

\section{Derivation: ViTTT GLU as Linear Attention}\label{app:glu_linear_attn}

We show that the simplified GLU used in ViTTT~\cite{han2025vit} can be written into a linear attention form with element-wise multiplication.

\subsection{Architecture and Loss}

In ViTTT, the GLU is defined as:
\[
f_t(x) = \mathrm{silu}(x W_{0,t}) \odot (x W_{1,t}),
\]
where $W_{0,t}, W_{1,t} \in \mathbb{R}^{D_h \times D_h}$ are both square matrices. We define the kernel function as
\[
\phi_t(x) \triangleq \mathrm{silu}(x W_{0,t}).
\]
Unlike the general form where the kernel function output is followed by matrix multiplication with a final layer, here the GLU uses element-wise multiplication between the nonlinear gating $\phi_t(x)$ and the linear projection $x W_{1,t}$.

The inner-loop loss uses the Frobenius inner product:
\[
\mathcal{L}(f_t(k_t), v_t) = -\langle f_t(k_t), v_t \rangle.
\]

\subsection{Gradient Computation}

The upstream gradient with respect to the GLU output is $\frac{\partial \mathcal{L}}{\partial f_t(k_t)} = -v_t$.

Applying the chain rule, the gradient of $W_{1,t}$ is:
\[
\nabla_{W_{1,t}} \mathcal{L} = k_t^{\top} \left( \frac{\partial \mathcal{L}}{\partial f_t(k_t)} \odot \phi_t(k_t) \right) = -k_t^{\top} (v_t \odot \phi_t(k_t)).
\]

\subsection{Linear Attention Form}

After one step of gradient descent with learning rate $\eta$:
\[
W_{1,t+1} = W_{1,t} - \eta \nabla_{W_{1,t}} \mathcal{L} = W_{1,t} + \eta\, k_t^{\top} (v_t \odot \phi_t(k_t)).
\]
Similarly, $W_{0,t}$ is updated to $W_{0,t+1}$, yielding the updated kernel function $\phi_{t+1}(\cdot)$.

Evaluating the model on query $q_t$ gives:
\begin{align*}
o_t &= f_{t+1}(q_t) = \phi_{t+1}(q_t) \odot (q_t W_{1,t+1}) \\
&= \phi_{t+1}(q_t) \odot \bigl( q_t W_{1,t} + \eta\, (q_t k_t^{\top}) (v_t \odot \phi_t(k_t)) \bigr) \\
&= \phi_{t+1}(q_t) \odot \Bigl( q_t \bigl( W_{1,t} + \eta\, k_t^{\top} (v_t \odot \phi_t(k_t)) \bigr) \Bigr).
\end{align*}

This is a linear attention form where:
\begin{itemize}
\item The second term computes a scalar attention weight $\langle q_t, k_t \rangle$ that modulates the value vector $v_t$
\item $\phi_t(k_t)$ acts as a multiplicative gate on the values
\item $\phi_{t+1}(q_t)$ gates the final output
\end{itemize}

The state matrix $S_t = W_{1,t}$ accumulates outer products of keys and gated values, consistent with the general linear attention framework.

\section{Derivation: ViTTT Depthwise Convolution as Linear Attention}\label{app:conv_linear_attn}

We show that the $3\times 3$ depthwise convolution layer in ViTTT~\cite{han2025vit} can be formulated as a form of spatially-local linear attention.

\subsection{Architecture and Loss}

Let $K, V \in \mathbb{R}^{C \times H \times W}$ be the spatial key and value tensors, and $W_t \in \mathbb{R}^{C \times 1 \times 3 \times 3}$ be the depthwise convolution weights at step $t$.
The forward pass computes:
\[
f_t(K) = \mathrm{Conv}_{3\times 3}(K; W_t),
\]
where $\mathrm{Conv}_{3\times 3}$ denotes depthwise convolution with $3\times 3$ kernel.

The inner-loop loss uses the Frobenius inner product:
\[
\mathcal{L}(f_t(K), V) = -\langle f_t(K), V \rangle = -\sum_{c,i,j} [f_t(K)]_{c,i,j} \cdot V_{c,i,j}.
\]

\subsection{Gradient Computation}

The upstream gradient is $\frac{\partial \mathcal{L}}{\partial f_t(K)} = -V$.

For depthwise convolution, the gradient with respect to the weight $W_t$ can be written as:
\[
\nabla_{W_t} \mathcal{L} = K \star \frac{\partial \mathcal{L}}{\partial f_t(K)} = -K \star V,
\]
where $\star$ denotes cross-correlation. Specifically, for each channel $c$ and offset $(\delta_y, \delta_x) \in \{-1, 0, 1\}^2$:
\[
[\nabla_{W_t} \mathcal{L}]_{c, \delta_y, \delta_x} = -\sum_{i,j} K_{c, i+\delta_y, j+\delta_x} \cdot V_{c, i, j}.
\]

\subsection{Linear Attention Form}

After one step of gradient descent with learning rate $\eta$:
\[
[W_{t+1}]_{c, \delta_y, \delta_x} = [W_t]_{c, \delta_y, \delta_x} + \eta \sum_{i,j} K_{c, i+\delta_y, j+\delta_x} \cdot V_{c, i, j}.
\]

When evaluating on query $Q \in \mathbb{R}^{C \times H \times W}$, the output at position $(i, j)$ is:
\begin{align*}
O_{c, i, j} &= \sum_{\delta_y, \delta_x} [W_{t+1}]_{c, \delta_y, \delta_x} \cdot Q_{c, i+\delta_y, j+\delta_x} \\
&= \underbrace{\sum_{\delta_y, \delta_x} [W_t]_{c, \delta_y, \delta_x} \cdot Q_{c, i+\delta_y, j+\delta_x}}_{[\mathrm{Conv}(Q; W_t)]_{c,i,j}} \\
&\quad + \eta \sum_{i',j'} \underbrace{\Big(\sum_{\delta_y, \delta_x} Q_{c, i+\delta_y, j+\delta_x} \cdot K_{c, i'+\delta_y, j'+\delta_x}\Big)}_{\text{spatial attention weight}} \cdot V_{c, i', j'}.
\end{align*}

This is a linear attention form where:
\begin{itemize}
\item The first term corresponds to the initial state $S_0 = W_t$
\item The attention weight between query position $(i, j)$ and key position $(i', j')$ is the sum of element-wise products over the $3\times 3$ neighborhood offsets
\item This spatially-local attention allows each output position to attend to all key-value positions, weighted by the overlap of their local $3\times 3$ neighborhoods
\end{itemize}

Conceptually, since convolution is effectively a sliding-window linear layer, this TTT component is equivalent to a sliding-window linear attention mechanism.

\section{Parallel Form of TTT}\label{app:parallel_form_proof}

We present the parallel formulation for TTT when only $W_1$ is dynamic (while $W_0$ and $W_2$ are static) and weight normalization is omitted. Under these conditions, the kernel function becomes static:
\[
\phi(x) = \mathrm{silu}(x W_0) \odot (x W_2),
\]
and the state update is associative, enabling parallel computation.

\subsection{Parallel Formulation}

The parallel formulation operates on a sequence of $N$ chunks, each of size $L$. Let $\mathbb{Q}, \mathbb{K} \in \mathbb{R}^{(NL) \times D_k}$ and $\mathbb{V} \in \mathbb{R}^{(NL) \times D_v}$ denote the concatenated query, key, and value inputs across all chunks. Define the batched kernel function:
\[
\Phi(\mathbf{X}) = \mathrm{silu}(\mathbf{X} W_0) \odot (\mathbf{X} W_2) \in \mathbb{R}^{(NL) \times D_h}.
\]

To express chunk-wise operations, we introduce:
\begin{itemize}
\item Block-diagonal matrix $\mathbf{B} \in \mathbb{R}^{(NL) \times (NL)}$: $\mathbf{B} = \mathrm{diag}(I_L, \ldots, I_L)$ with $N$ identity blocks
\item Per-chunk learning rate $\boldsymbol{\eta} \in \mathbb{R}^{N}$ and per-chunk momentum coefficient $\boldsymbol{\alpha} \in \mathbb{R}^{N}$
\item Effective accumulation mask $\mathcal{C} \in \mathbb{R}^{N \times N}$ that folds the per-chunk learning rates into the cumulative momentum decay:
\[
\mathcal{C}_{ti} \;=\; \sum_{j=i}^{t} \eta_j \prod_{s=i+1}^{j} \alpha_s \quad \text{for } t \geq i, \quad 0 \text{ otherwise}.
\]
\end{itemize}

The output can be computed in parallel as:
\[
\mathbb{O} = \Phi(\mathbb{Q})\, W_{1,0} + \big((\Phi(\mathbb{Q})\, \Phi(\mathbb{K})^\top) \odot \mathcal{C}^{\uparrow L}\big)\, \mathbb{V},
\]
where $(\cdot)^{\uparrow L}$ denotes the Kronecker product with $\mathbf{1}_{L \times L}$ to expand the $N \times N$ mask to $(NL) \times (NL)$.

\subsection{Proof of Equivalence}

We prove that this parallel formulation is equivalent to the sequential recurrence.

\begin{proof}
Consider the sequential formulation where at each chunk $t$, the weight update is:
\[
\Delta W_t = \nabla_{W_1} \mathcal{L}(f_t(K_t)) + \alpha_t \Delta W_{t-1}, \quad W_{1,t+1} = W_{1,t} - \eta_t \Delta W_t.
\]
Since $\nabla_{W_1} \mathcal{L} = \Phi(K_t)^\top \frac{\partial \mathcal{L}}{\partial f_t(K_t)}$ and using the Frobenius inner product loss, we have:
\[
\Delta W_t = -\Phi(K_t)^\top V_t + \alpha_t \Delta W_{t-1}.
\]

\paragraph{Step 1: Unroll the momentum recurrence.}
Expanding $\Delta W_t$:
\begin{align*}
\Delta W_t &= -\sum_{i=0}^{t} \left(\prod_{s=i+1}^{t} \alpha_s\right) \Phi(K_i)^\top V_i.
\end{align*}

\paragraph{Step 2: Unroll the weight recurrence.}
From $W_{1,t+1} = W_{1,0} - \sum_{j=0}^{t} \eta_j \Delta W_j$, substituting and rearranging:
\begin{align*}
W_{1,t+1} &= W_{1,0} + \sum_{i=0}^{t} \Phi(K_i)^\top V_i \cdot \underbrace{\sum_{j=i}^{t} \eta_j \prod_{s=i+1}^{j} \alpha_s}_{c_i^t}.
\end{align*}

\paragraph{Step 3: Compute the sequential output.}
The output at chunk $t$ is:
\begin{align*}
O_t &= \Phi(Q_t)\, W_{1,t+1} = \Phi(Q_t)\, W_{1,0} + \sum_{i=0}^{t} \Phi(Q_t)\, \Phi(K_i)^\top V_i \cdot c_i^t.
\end{align*}

\paragraph{Step 4: Match with parallel formulation.}
By construction, the mask $\mathcal{C}^{\uparrow L}$ at block $(t, i)$ has value $\mathcal{C}_{ti} = \sum_{j=i}^{t} \eta_j \prod_{s=i+1}^{j} \alpha_s = c_i^t$ for $i \leq t$. For any token position $p$ in chunk $t$, the parallel form therefore computes:
\begin{align*}
[\mathbb{O}]_p &= \Phi(Q_t)_p\, W_{1,0} + \sum_{i=0}^{t} \Phi(Q_t)_p\, \Phi(K_i)^\top V_i \cdot c_i^t,
\end{align*}
which exactly matches the sequential output.
\end{proof}

\section{Non-Reducible Case Analysis}
\label{app:non_reducible}

In Section~\ref{sec:parallel_form}, we showed that TTT becomes reducible (and thus parallelizable) when only $W_1$ is dynamic while $W_0$ and $W_2$ are static.
Here, we analyze two cases that break reducibility: (1) updating the kernel function parameters $\Theta = \{W_0, W_2\}$, and (2) applying weight normalization.

\subsection{Case 1: Dynamic Kernel Function (Updating $W_0$ and $W_2$)}

When $W_0$ and $W_2$ are also updated, the kernel function $\phi_t(x) = \mathrm{silu}(x W_{0,t}) \odot (x W_{2,t})$ becomes history-dependent.
Consider the gradient update for $W_0$ (the analysis for $W_2$ is analogous):
\[
\nabla_{W_{0,t}} \mathcal{L} = K_t^\top \Big(\frac{\partial \mathcal{L}}{\partial f_t(K_t)} W_{1,t}^\top \odot (K_t W_{2,t}) \odot \operatorname{silu}'(K_t W_{0,t})\Big).
\]

The weight update becomes:
\[
W_{0,t+1} = W_{0,t} - \eta_t \nabla_{W_{0,t}} \mathcal{L}.
\]

Expanding recursively, the kernel function at step $t$ depends on $W_{0,t}$:
\[
\phi_t(K_t) = \operatorname{silu}(K_t W_{0,t}) \odot (K_t W_{2,t}).
\]

Substituting the recursive expression for $W_{0,t}$, the kernel function involves nested nonlinearities:
\[
\phi_t(K_t) = \operatorname{silu}\Big(K_t \big(W_{0,t-1} - \eta_{t-1} \nabla_{W_{0,t-1}} \mathcal{L}\big)\Big) \odot (\cdots),
\]
where $\nabla_{W_{0,t-1}} \mathcal{L}$ itself contains $\operatorname{silu}'(K_{t-1} W_{0,t-1})$.

The nested $\operatorname{silu}$ and $\operatorname{silu}'$ functions create a non-linear dependency chain: computing $\phi_t$ requires $W_{0,t}$, which depends on $\operatorname{silu}'(K_{t-1} W_{0,t-1})$, which in turn depends on $W_{0,t-1}$, and so on. This nested structure prevents expressing the output as a simple weighted sum over history, breaking the associativity required for parallel prefix scan.

\subsection{Case 2: Weight Normalization}

Even when only $W_1$ is dynamic, applying weight normalization (as in LaCT, see Appendix~\ref{app:lact_derivation}) introduces non-reducibility for the parallel formulation.

As discussed in Appendix~\ref{app:lact_derivation}, weight normalization does not break the linear attention \emph{interpretation}---it simply normalizes the state $S_t$ after each token. However, it does prevent the \emph{parallel computation} enabled by the sum form.

The key issue is that normalization is not associative:
\begin{align}
\text{Norm}(A + B) \neq \text{Norm}(A) + \text{Norm}(B).
\end{align}

The parallel formulation in Section~\ref{sec:parallel_form} relies on expressing the state as $S_{t+1} = S_0 + \sum_{i=0}^{t} \phi(K_i)^\top m_i$, which can be computed via associative parallel prefix scan. With weight normalization, the state becomes:
\[
S_{t+1} = \mathrm{Norm}\Big(\mathrm{Norm}(S_{t-1} + \phi(K_{t-1})^\top m_{t-1}) + \phi(K_t)^\top m_t\Big).
\]

This nested structure creates a strict sequential dependency: computing $S_{t+1}$ requires the fully normalized $S_t$, which requires $S_{t-1}$, and so on. The associativity required for parallel prefix scan is broken, forcing token-by-token sequential computation even though the underlying mechanism remains linear attention.


\end{document}